\theoremstyle{definition}
\begin{document}

\title{Effective Evaluation Using Logged Bandit Feedback from Multiple Loggers}

\author{Aman Agarwal, Soumya Basu, Tobias Schnabel, Thorsten Joachims}
\affiliation{%
  \institution{Cornell University, Dept. of Computer Science}
  \city{Ithaca} 
  \state{NY}
  \country{USA}}
\email{[aa2398,sb2352,tbs49,tj36]@cornell.edu}


\begin{abstract}

Accurately evaluating new policies (e.g. ad-placement models, ranking functions, recommendation functions) is one of the key prerequisites for improving interactive systems. While the conventional approach to evaluation relies on online A/B tests, recent work has shown that counterfactual estimators can provide an inexpensive and fast alternative, since they can be applied offline using log data that was collected from a different policy fielded in the past. In this paper, we address the question of how to estimate the performance of a new target policy when we have log data from multiple historic policies. This question is of great relevance in practice, since policies get updated frequently in most online systems. 
We show that naively combining data from multiple logging policies can be highly suboptimal. In particular, we find that the standard Inverse Propensity Score (IPS) estimator suffers especially when logging and target policies diverge -- to a point where throwing away data improves the variance of the estimator. We therefore propose two alternative estimators which we characterize theoretically and compare experimentally. We find that the new estimators can provide substantially improved estimation accuracy.

\end{abstract}

%
%
\begin{CCSXML}
<ccs2012>
<concept>
<concept_id>10010147.10010257.10010282.10010292</concept_id>
<concept_desc>Computing methodologies~Learning from implicit feedback</concept_desc>
<concept_significance>500</concept_significance>
</concept>
<concept>
<concept_id>10010147.10010178.10010187.10010192</concept_id>
<concept_desc>Computing methodologies~Causal reasoning and diagnostics</concept_desc>
<concept_significance>300</concept_significance>
</concept>
<concept>
<concept_id>10002951.10003317.10003359</concept_id>
<concept_desc>Information systems~Evaluation of retrieval results</concept_desc>
<concept_significance>300</concept_significance>
</concept>
</ccs2012>
\end{CCSXML}

\ccsdesc[500]{Computing methodologies~Learning from implicit feedback}
\ccsdesc[300]{Computing methodologies~Causal reasoning and diagnostics}
\ccsdesc[300]{Information systems~Evaluation of retrieval results}


\keywords{counterfactual estimators, log data, implicit feedback, off-policy evaluation}

\maketitle




\section{Introduction}
Interactive systems (e.g., search engines, ad-placement systems, recommender systems, e-commerce sites) are typically evaluated according to online metrics (e.g., click through rates, dwell times) that reflect the users' response to the actions taken by the system. For this reason, A/B tests are of widespread use in which the new policy to be evaluated is fielded to a subsample of the user population. Unfortunately, A/B tests come with two drawbacks. First, they can be detrimental to the user experience if the new policy to be evaluated performs poorly. Second, the number of new policies that can be evaluated in a given amount of time is limited, simply because each A/B test needs to be run on a certain fraction of the overall traffic and should ideally span any cycles (e.g. weekly patterns) in user behavior.

Recent work on counterfactual evaluation techniques provides a principled alternative to A/B tests that does not have these drawbacks \cite{li2011unbiased,li2015counterfactual,bottou2013counterfactual,swaminathan2015counterfactual}. These techniques do not require that the new policy be deployed online, but they instead allow reusing logged interaction data that was collected by a different policy in the past. In this way, these estimators address the counterfactual inference question of how a new policy would have performed, if it had been deployed instead of the old policy that actually logged the data. This allows reusing the same logged data for evaluating many new policies, greatly improving scalability and timeliness compared to A/B tests. 

In this paper, we address the problem of counterfactual evaluation when log data is available not just from one logging policy, but from multiple logging policies. Having data from multiple policies is common to most practical settings where systems are repeatedly modified and deployed. While the standard counterfactual estimators based on inverse propensity scores (IPS) apply to this situation, we show that they are suboptimal in terms of their estimation quality. In particular, we investigate the common setting where the log data takes the form of contextual bandit feedback from a stochastic policy, showing that the variance of the conventional IPS estimator suffers substantially when the historic policies are sufficiently different -- to a point where throwing away data improves the variance of the estimator. To overcome the statistical inefficiency of the conventional IPS estimator, we explore two alternative estimators that directly account for the data coming from multiple different logging policies. We show theoretically that both estimators are unbiased, and have lower variance than the conventional IPS estimator. Furthermore, we quantify the amount of variance reduction in an extensive empirical evaluation that demonstrates the effectiveness of both the estimators.

\section{Related Work}
The problem of re-using logged bandit feedback is often part of counterfactual learning \cite{bottou2013counterfactual,li2015counterfactual,swaminathan2015counterfactual}, and more generally can be viewed as part of off-policy evaluation in reinforcement learning \cite{sutton1998reinforcement,precup2000eligibility}.

In counterfactual learning, solving the evaluation problem is often the first step to deriving a learning algorithm \cite{strehl2010learning,bottou2013counterfactual,swaminathan2015counterfactual}. The key to being able to counterfactually reason based on logged data is randomness in the logged data. Approaches differ in how randomness is being included in the policies. For example, in \cite{li2015counterfactual} randomization is directly applied to the actions of each policy, whereas \cite{bottou2013counterfactual} randomizes individual policy parameters to create a distribution over actions. 

In exploration scavenging \cite{langford2008exploration}, the authors address counterfactual evaluation in a setting where the actions do not depend on the context. They mention the possibility of combining data from different policies by interpreting each policy as an action. Li et al. \cite{li2015toward} propose to use naturally occurring randomness in the logged data when policies change due to system changes. Since this natural randomness may not be entirely under the operator's control, the authors propose to estimate the probability that a certain logging policy was in place to recover propensities. 
The balanced IPS estimator studied in this paper could serve as a starting point for further techniques in that direction. 

Evaluation from logged data has often been studied with respect to specific domains, for example in news recommendation \cite{li2010contextual,li2011unbiased,li2015counterfactual} as well as in information retrieval \cite{hofmann2013reusing,li2015counterfactual}. The work by Li et al. \cite{li2011unbiased} highlights another common use-case in practice, where different logging policies are all active at the same time, focusing on the evaluation of different new methods.
The estimators in this paper can naturally be applied to this scenario as well to augment logging data of one policy with the data from others. An interesting example for probabilistic policies can be found in  \cite{hofmann2013reusing}, where the authors consider policies that are the probabilistic interleaving of two deterministic ranking policies and use log data to pre-select new candidate policies. 

Very related to combining logs from different policies is the problem of combining samples coming
from different proposal distributions in importance sampling \cite{owen2000safe,mcbook,elvira2015efficient}.
There, samples are drawn from multiple proposal distributions and need to be combined in a way that
reduces variance of the combined estimator. Multiple importance sampling has been particularly studied in computer graphics \cite{veach1995optimally}, as Monte Carlo techniques are employed for rendering.
Most related to the weighted IPS estimator presented later in the paper is adaptive multiple importance sampling (AMIS) \cite{cornuet2012adaptive,elvira2015generalized} that also recognizes that it is not optimal to weigh contributions from all proposal distributions the same, but instead updates weights as well as the proposal distributions after each sampling step. The most notable differences to our setting here are that (i) we regard the sampling distributions as given and fixed, and (ii) the sampled log data is also fixed.
An interesting avenue for future work would be to use control variates to further reduce variance of our estimators \cite{owen2000safe,heraOptimal}, although this approach is computationally demanding since it requires solving a quadratic problem to determine optimal
weights.

Another related area is sampling-based evaluation of information retrieval systems \cite{yilmaz2008simple,carterette2009,schnabel2016comparative}. Instead of feedback data that stems from interactions with users, the observed feedback comes from judges. A policy in this case corresponds to a sampling strategy which determines the query-document pairs to be sent out for judgement. As shown by Carterette et al. \cite{carterette2009}, relying on sampling-based elicitation schemes cuts down the number of required judgements substantially as compared to a classic deterministic pooling scheme. The techniques proposed in our paper could also be applied to the evaluation of retrieval systems when data from different judgement pools need to be combined.

\section{Problem Setting}
\label{sec:problem}

\newcommand{\Xset}{\ensuremath{{\mathcal{X}}}}
\newcommand{\Yset}{\ensuremath{{\mathcal{Y}}}}
\newcommand{\h}{\ensuremath{{\pi}}}
\newcommand{\htar}{\ensuremath{{\bar{\h}}}}
\newcommand{\hlog}[1]{\ensuremath{{\h_{#1}}}}
\newcommand{\havg}{\ensuremath{{\h_{avg}}}}
\newcommand{\Utrue}[1]{\ensuremath{{U({#1})}}}
\newcommand{\D}[1]{\ensuremath{{\mathcal{D}^{#1}}}}
\newcommand{\Dall}{\ensuremath{{\mathcal{D}}}}
\newcommand{\x}[2]{\ensuremath{{x^{#1}_{#2}}}}
\newcommand{\y}[2]{\ensuremath{{y^{#1}_{#2}}}}
\newcommand{\p}[2]{\ensuremath{{p^{#1}_{#2}}}}
\newcommand{\del}[2]{\ensuremath{{\delta^{#1}_{#2}}}}
\newcommand{\num}{\ensuremath{{m}}}
\newcommand{\size}[1]{\ensuremath{{n_{#1}}}}
\newcommand{\sizeall}{\ensuremath{{n}}}
\newcommand{\Unaive}{\ensuremath{{\hat{U}_{naive}(\htar)}}}
\newcommand{\Ubal}{\ensuremath{{\hat{U}_{bal}(\htar)}}}
\newcommand{\Uwt}{\ensuremath{{\hat{U}_{weight}(\htar)}}}
\newcommand{\Ugen}{\ensuremath{{\hat{U}(\htar)}}}
\newcommand{\Ugenweight}{\ensuremath{{\hat{U}_\lambda(\htar)}}}
\newcommand{\wt}[1]{\ensuremath{{\lambda_{#1}}}}
\newcommand{\wtopt}[1]{\ensuremath{{\lambda^{*}_{#1}}}}
\newcommand{\diverg}[2]{\ensuremath{{\sigma_\delta^2({#1}||{#2)}}}}
\newcommand{\diverghat}[2]{\ensuremath{{\hat{\sigma}_\delta^2({#1}||{#2)}}}}
\newcommand{\var}[1]{\ensuremath{{\mathrm{Var}_{\Dall}[{#1}]}}}

In this paper, we study the use of logged Bandit feedback that arises in interactive learning systems. In these systems, the system receives as input a vector $x \in \Xset$, typically encoding user input or other contextual information.
Based on input $x$, the system responds with an action $y \in \Yset$ for which it receives some feedback in the form of a cardinal utility value $\delta: \Xset \times \Yset \mapsto \mathbb{R}$. Since the system only receives feedback for the action $y$ that it actually takes, this feedback is often referred to as Bandit feedback \cite{swaminathan2015counterfactual}.

For example, in ad placement models, the input $x$ typically encodes user-specific information as well as the web page content, and the system responds with an ad $y$ which is then displayed on the page. Finally, user feedback $\delta(x,y)$ for the displayed ad is presented, such as whether the ad was clicked or not. Similarly, for a news website, the input $x$ may encode user-specific and other contextual information to which the system responds with a personalized home page $y$. In this setting, the user feedback $\delta(x,y)$ could be the time spent by the user on the news website. 


In order to be able to counterfactually evaluate new policies, we consider \textit{stochastic policies} $\h$ that define a probability distribution over the output space $\Yset$. Predictions are made by sampling $y \sim \h(\Yset|x)$ from a policy given input $x$.  The inputs are assumed to be drawn i.i.d.\ from a fixed but unknown distribution $x \stackrel{i.i.d.}{\sim} Pr(\Xset)$. The feedback $\delta(x,y)$ is a cardinal utility that is only observed at the sampled data points. Large values for $\delta(x,y)$ indicate user satisfaction with $y$ for $x$, while small values indicate dissatisfaction. 

We evaluate and compare different policies with respect to their induced utilities. The utility of a policy $\Utrue{\h}$ is defined as the expected utility of its predictions under both the input distribution as well as the stochastic policy. More formally:

\begin{definition}[Utility of Policy] The utility of a policy $\h$ is
\begin{align*}
  \Utrue{\h} &\equiv \mathbb{E}_{x \sim \Pr(\Xset)} \mathbb{E}_{y \sim \h(\Yset|x)}[\delta(x, y)] \\
   &= \sum_{\mathclap{\substack{x \in \Xset \\ y \in \Yset}}}
\text{Pr}(x) \h(y | x) \delta(x, y)
\end{align*}
\end{definition}

Our goal is to re-use the interaction logs collected from multiple historic policies to estimate the utility of a new policy. In this paper, we denote the  the new policy (also called the target policy) as $\htar$, and the $\num$ logging policies as $\hlog{1}, \dots, \hlog{n}$. The log data collected from each logging policy $\hlog{i}$
is 
\begin{align*}
    \D{i} = \{(\x{i}{1}, \y{i}{1}, \del{i}{1}, \p{i}{1}), \dots, (\x{i}{\size{i}}, \y{i}{\size{i}}, \del{i}{\size{i}}, \p{i}{\size{i}})\},
\end{align*} 
where $\size{i}$ data-points are collected from logging policy $\hlog{i}$, $\x{i}{j} \sim \Pr(\Xset)$, $\y{i}{j} \sim \hlog{i}(\Yset|\x{i}{j})$, $\del{i}{j} \equiv
\delta(\x{i}{j},\y{i}{j})$, and $\p{i}{j} \equiv \hlog{i}(\y{i}{j} | \x{i}{j})$. Note that during the operation of the
logging policies, the propensities $\hlog{i}(y|x)$ are tracked and appended to the logs. We will also
assume that the quantity $\hlog{i}(y|x)$ is available at all $(x,y)$ pairs. This is a very mild
assumption since the logging policies were designed and controlled by us, so their code can be stored.
Finally, let $\Dall = \bigcup\limits_{i=1}^{\num} \D{i}$ denote the combined collection of log data over all the logging policies, and $\sizeall = \sum_{i=1}^\num \size{i}$ denote the total number of samples. 

Unfortunately, it is not possible to directly compute the utility of a policy based on log data using the formula from the definition above. While we have a random sample of the contexts $x$ and the target policy $\h(y | x)$ is known by construction, we lack full information about the feedback $\delta(x,y)$. In particular, we know $\delta(x,y)$ only for the particular action chosen by the logging policy, but we do not necessarily know it for all the actions that the target policy $\h(y | x)$ can choose. In short, we only have logged bandit feedback, but not full-information feedback. This motivates the use of statistical estimators to overcome the infeasibility of exact computation. In the following sections, we will explore three such estimators and focus on two of their key statistics properties, namely their bias and variance.

\section{Naive Inverse Propensity Scoring}
\label{sec:naive}



A natural first candidate to explore for the evaluation problem using multiple logging policies as defined above is the well-known inverse propensity score (IPS) estimator. It simply averages over all datapoints, and corrects for the distribution mismatch betweenthe logging policies $\hlog{i}$ and the target policy $\htar$ using a weighting term:

\begin{definition}[Naive IPS Estimator]   
\begin{align*}
\Unaive \equiv \frac{1}{\sizeall} \sum_{i=1}^\num \sum_{j=1}^\size{i}\del{i}{j}\frac{ \htar(\y{i}{j} | \x{i}{j})}{\p{i}{j}}.
\end{align*}
\end{definition}

This is an unbiased estimator as shown below, as long as all logging policies have full support for the new policy $\htar$. 

\begin{definition}[Support] Policy $\h$ is said to have \textit{support} for policy $\h^\prime$ if for all $x \in \Xset$ and $y \in \Yset$, 
\begin{align*}
\delta(x,y)\h^\prime(y|x) \neq 0 \Rightarrow \h(y|x) > 0.
\end{align*}
\end{definition}

\begin{proposition}[Bias of Naive IPS Estimator] 
\label{Lem:unaiveunbiased}
Assume each logging policy $\hlog{i}$ has support for target $\htar$. For $\Dall$ consisting of i.i.d. draws from $\Pr(\Xset)$ and logging policies $\hlog{i}(\Yset|x)$, the naive IPS estimator is unbiased:
\begin{align*}
\mathbb{E}_{\Dall} [\Unaive] 
= \Utrue{\htar}.     
\end{align*}
\end{proposition}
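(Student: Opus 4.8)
The plan is to exploit linearity of expectation to reduce the claim to a single-sample importance-sampling identity, and then invoke the support assumption to justify the key cancellation. Since $\Unaive$ is an average of $\sizeall$ terms scaled by $1/\sizeall$, by linearity I would compute $\mathbb{E}_{\Dall}$ of each individual summand $\del{i}{j}\,\htar(\y{i}{j}|\x{i}{j})/\p{i}{j}$ separately. Because the samples are drawn i.i.d., the expectation of a summand depends only on which logging policy $\hlog{i}$ produced it, not on the within-policy index $j$, so it suffices to analyze one representative term per policy.

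For a single draw $x \sim \Pr(\Xset)$ and $y \sim \hlog{i}(\Yset|x)$, I would expand the expectation of the importance-weighted utility as a double sum over $x$ and $y$, substituting $\p{i}{j} = \hlog{i}(y|x)$. The crucial step is the cancellation
\begin{align*}
\mathbb{E}_{x}\mathbb{E}_{y \sim \hlog{i}(\Yset|x)}\!\left[\delta(x,y)\frac{\htar(y|x)}{\hlog{i}(y|x)}\right]
= \sum_{x}\Pr(x)\sum_{y}\hlog{i}(y|x)\,\delta(x,y)\frac{\htar(y|x)}{\hlog{i}(y|x)},
\end{align*}
where the factor $\hlog{i}(y|x)$ coming from the sampling distribution cancels the propensity in the denominator, leaving $\sum_x \Pr(x)\sum_y \htar(y|x)\delta(x,y) = \Utrue{\htar}$ by the definition of utility.

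The main obstacle -- really the only subtle point -- is that this cancellation is valid only where $\hlog{i}(y|x) > 0$, so I must handle the terms with $\hlog{i}(y|x) = 0$ carefully. This is exactly where the support assumption enters: restricting the inner sum to $\{y : \hlog{i}(y|x) > 0\}$ is harmless because, by the definition of support, any $y$ with $\hlog{i}(y|x) = 0$ satisfies $\delta(x,y)\htar(y|x) = 0$ and therefore contributes nothing to the target utility. Once this is established, every one of the $\sizeall$ summands has expectation exactly $\Utrue{\htar}$, and the $1/\sizeall$ prefactor yields $\mathbb{E}_{\Dall}[\Unaive] = \frac{1}{\sizeall}\cdot \sizeall \cdot \Utrue{\htar} = \Utrue{\htar}$, completing the argument.
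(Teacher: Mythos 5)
Your proposal is correct and follows essentially the same route as the paper's proof: linearity of expectation reduces the claim to a per-sample importance-sampling identity, the propensity cancels against the logging policy's sampling probability, and the support assumption justifies that cancellation (your treatment of the $\hlog{i}(y|x)=0$ terms just makes explicit what the paper compresses into the remark that ``the second equality is valid since each $\hlog{i}$ has support for $\htar$''). No gaps; the argument is complete as planned.
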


\begin{proof}
By linearity of expectation, 
\begin{align*}
\mathbb{E}_{\Dall}[\Unaive] 
&= \frac{1}{\sizeall} \sum_{i=1}^\num \sum_{j=1}^\size{i} \mathbb{E}_{x \sim \Pr(\Xset), y \sim \hlog{i}(\Yset|x)} \left[\frac{\delta(x, y) \htar(y | x)}{\hlog{i}(y | x)}\right] \\
 &= \frac{1}{\sizeall} \sum_{i=1}^\num \size{i} \sum_{\mathclap{\substack{x \in \Xset \\ y \in \Yset}}} 
\text{Pr}(x) \hlog{i}(y|x) \frac{\delta(x, y) \htar(y | x)}{\hlog{i}(y | x)} \\
 &= \sum_{\mathclap{\substack{x \in \Xset \\ y \in \Yset}}}
\text{Pr}(x) \delta(x, y) \htar(y | x) \\
 &= \mathbb{E}_{x \sim Pr(\Xset)} \mathbb{E}_{y \sim \htar(\Yset|x)} [\delta(x, y)] \\
 &= \Utrue{\htar}.   
\end{align*}
The second equality is valid since each $\hlog{i}$ has support for $\htar$. 
\end{proof}
Note that the requirement that the logging policies $\hlog{i}$ have support for the target policy can be  satisfied by ensuring that $\hlog{i}(y|x) > \epsilon$ when deploying policies.  

We can also characterize the variance of the naive IPS estimator.
\begin{align}
\mathrm{Var}_{\Dall}&[\Unaive] \label{eq:naiveipsvar} \\ 
= & \frac{1}{\sizeall^2} \sum_{i=1}^\num \size{i} \; \bigg( \sum_{\mathclap{\substack{x \in \Xset  \\ y \in \Yset}}}  \frac{(\delta(x, y) \htar(y|x))^2}{\h_i(y | x)} \Pr(x) - \Utrue{\htar}^2 \bigg). \nonumber
\end{align}

Having characterized both the bias and the variance of the Naive IPS Estimator, how does it perform on datasets that come from multiple logging policies?

\subsection{Suboptimality of Naive IPS Estimator}

To illustrate the suboptimality of the Naive IPS Estimator when we have data from multiple logging policies, consider the following toy example where we wish to evaluate a new policy $\htar$ given data from two logging policies $\hlog{1}$ and $\hlog{2}$. For simplicity and without loss of generality, consider logged bandit feedback which consists of one sample from $\hlog{1}$ and another sample from $\hlog{2}$, more specifically, we have two logs $\D{1} = \{(\x{1}{1}, \y{1}{1}, \del{1}{1}, \p{1}{1})\}$, and
    $\D{2} = \{(\x{2}{1}, \y{2}{1}, \del{2}{1}, \p{2}{1})\}$.
There are two possible inputs $x_1, x_2$ and two possible output predictions $y_1, y_2$. The cardinal utility function $\delta$, the input distribution $\Pr(\Xset)$, the target policy $\htar$, and the two logging policies $\hlog{1}$ and $\hlog{2}$ are given in Table~\ref{tab:toy}. 

\begin{table}[t]
\begin{tabular}{lrrr}
    \toprule
          &   $\:\:\:\:\:\:\:\:$    & $\:\:\:\:\:\:\:\:\:\:\:\:x_1$ & $\:\:\:\:\:\:\:\:x_2$ \\
    \midrule
    Pr(x) &       & 0.5   & 0.5 \\
    \midrule
    \multicolumn{1}{l}{\multirow{2}[0]{*}{$\delta(x,y)$}} & $y_1$ & 10    & 1 \\
    \multicolumn{1}{l}{} & $y_2$ & 1     & 10 \\
    \midrule
    \multicolumn{1}{l}{\multirow{2}[0]{*}{$\hlog{1}(y|x)$}} & $y_1$ & 0.2   & 0.8 \\
    \multicolumn{1}{l}{} & $y_2$ & 0.8   & 0.2 \\
    \midrule
    \multicolumn{1}{l}{\multirow{2}[0]{*}{$\hlog{2}(y|x)$}} & $y_1$ & 0.9   & 0.1 \\
    \multicolumn{1}{l}{} & $y_2$ & 0.1   & 0.9 \\
    \midrule
    \multicolumn{1}{l}{\multirow{2}[0]{*}{$\htar(y|x)$}} & $y_1$ & 0.8   & 0.2 \\
    \multicolumn{1}{l}{} & $y_2$ & 0.2   & 0.8 \\
    \bottomrule
    \end{tabular}%
    \caption{Dropping data samples from logging policy $\hlog{1}$ lowers the variance of the naive and balanced IPS estimators when estimating the utility of $\htar$. }
    \label{tab:toy}
\end{table}

From the table, we can see that the target policy $\htar$ is similar to logging policy $\hlog{2}$, but that it is substantially different from $\hlog{1}$. Since the mismatch between target and logging policy enters the IPS estimator as a ratio, one would like to keep that ratio small for low variance. That, intuitively speaking, means that samples from $\hlog{2}$ result in lower variance than samples from $\hlog{1}$, and that the $\hlog{1}$ samples may be adding a large amount of variability to the estimate. Indeed, it turns out that simply omitting the data from $\D{1}$ greatly improves the variance of the estimator.  Plugging the appropriate values into the variance formula in Equation~(\ref{eq:naiveipsvar}) shows that the variance $\var{\Unaive}$ is reduced from $64.27$ to $4.27$ by dropping the sample from the first logging policy $\hlog{1}$. Intuitively, the variance of $\Unaive$ suffers because higher variance samples from one logging policy drown out the signal from the lower variance samples to an extent that can even dominate the benefit of having more samples. Thus, $\Unaive$ fails to make the most of the available log data by combining it in an overly naive way. 

Under closer inspection of Equation~(\ref{eq:naiveipsvar}), the fact that deleting data helps improve variance also makes intuitive sense. Since the overall variance contains the sum of variances over all individual samples, one can hope to improve variance by leaving out high-variance samples. This motivates the estimators we introduce in the following sections, and we will show how weighting samples generalizes this variance-minimization strategy. 

\section{Estimator from Multiple Importance Sampling}

Having seen that $\Unaive$ has suboptimal variance, we first explore an alternative estimator used in multiple importance sampling \cite{mcbook}. We begin with a brief review of multiple importance sampling.

Suppose there is a target distribution $p$ on $\mathcal{S} \subseteq \mathbb{R}^d$, a function $f$, and $\mu = \mathbb{E}_p(f(\mathbf{X})) = \int_\mathcal{S} f(x) p(x) \mathrm{d} x$ is the quantity to be estimated. The function $f$ is observed only at the sampled points. In multiple importance sampling, $n_j$ observations $x_{ij} \sim \mathbf{X}$, $i \in [n_j]$ are taken from sampling distributions $q_j$ for $j = 1, \ldots, J$. An unbiased estimator that is known to have low variance in this case is the \textit{balance heuristic} estimate \cite{mcbook};

\begin{align*}
   \tilde{\mu}_{\alpha} = \frac{1}{n} \sum_{j=1}^{J} \sum_{i=1}^{n_j} \frac{f(x_{ij}) p(x_{ij})}{\sum_{j=1}^J \alpha_j q_j(x_{ij})},
\end{align*}

where $n = \sum_{j=1}^{J} n_j$, and $\alpha_j = \frac{n_j}{n}$. Directly mapping the above to our setting, we define the Balanced IPS Estimator as follows.

\begin{definition}[Balanced IPS Estimator]    
\begin{align*}
  \Ubal = \frac{1}{\sizeall} \sum_{i=1}^\num \sum_{j=1}^\size{i} \del{i}{j} \frac{\htar(\y{i}{j} | \x{i}{j})}{\havg(\y{i}{j} | \x{i}{j})}, 
\end{align*}
where for all $x \in \Xset$ and $y \in \Yset$, $\havg(y | x) =  \frac {\sum_{i=1}^\num \size{i} \hlog{i}(y |x)}{\sizeall}$.
\end{definition}

Note that $\havg$ is a valid policy since the convex combination of probability distributions is a probability distribution. The balanced IPS estimator $\Ubal$ is also unbiased. Note that it now suffices that $\havg$ has support, but not necessarily that each individual $\hlog{i}$ has support.

\begin{proposition}[Bias of Balanced IPS Estimator]
Assume the policy $\havg$ has support for target $\htar$. For $\Dall$ consisting of i.i.d. draws from $\Pr(\Xset)$ and logging policies $\hlog{i}(\Yset|x)$, the Balanced IPS Estimator is unbiased:
\begin{align*}
\mathbb{E}_{\Dall} [\Ubal] 
= \Utrue{\htar}.     
\end{align*}
\end{proposition}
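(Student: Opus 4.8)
The plan is to follow the same scaffold as the proof of Proposition~\ref{Lem:unaiveunbiased}: apply linearity of expectation, rewrite each per-sample expectation as an explicit sum over $\Xset \times \Yset$, and then simplify. First I would note that the $j$-th sample in $\D{i}$ is generated by $\x{i}{j} \sim \Pr(\Xset)$ and $\y{i}{j} \sim \hlog{i}(\Yset|\x{i}{j})$, so that by linearity every summand contributes
\begin{align*}
\mathbb{E}_{x \sim \Pr(\Xset),\, y \sim \hlog{i}(\Yset|x)}\!\left[ \frac{\delta(x,y)\,\htar(y|x)}{\havg(y|x)} \right] = \sum_{\mathclap{\substack{x \in \Xset \\ y \in \Yset}}} \Pr(x)\,\hlog{i}(y|x)\,\frac{\delta(x,y)\,\htar(y|x)}{\havg(y|x)}.
\end{align*}

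The essential difference from the naive case --- and the step that makes the argument work --- is that here the sampling distribution $\hlog{i}$ in the numerator does \emph{not} cancel the denominator $\havg$, so the expectation of an individual summand is in general \emph{not} equal to $\Utrue{\htar}$. Unbiasedness only emerges once we aggregate over all $\num$ logs. Concretely, I would collect the $\frac{\size{i}}{\sizeall}$ weights, interchange the order of summation so that the sum over the policy index $i$ sits inside the sum over $(x,y)$, and isolate the factor $\frac{1}{\sizeall}\sum_{i=1}^{\num} \size{i}\,\hlog{i}(y|x)$. By the very definition of \havg, this factor equals $\havg(y|x)$, which cancels the $\havg(y|x)$ in the denominator and leaves $\sum_{x,y}\Pr(x)\,\delta(x,y)\,\htar(y|x) = \Utrue{\htar}$.

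The main obstacle is therefore conceptual rather than computational: one must resist trying to show each summand is individually unbiased (which fails) and instead recognize that the $\size{i}$-weighted mixture of logging policies appearing in the numerator reconstructs exactly the denominator $\havg$. The only place an assumption is needed is to ensure the ratio is well defined: the hypothesis that \havg has support for $\htar$ guarantees $\havg(y|x) > 0$ whenever $\delta(x,y)\htar(y|x) \neq 0$, so the cancellation above is legitimate and no $(x,y)$ pair with nonzero contribution is divided by zero. Notably this is weaker than the per-policy support required in the naive case, since only the aggregate $\havg$ must cover $\htar$.
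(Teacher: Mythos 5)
Your proposal is correct and follows essentially the same route as the paper's own proof: linearity of expectation, rewriting each per-sample expectation as a sum over $\Xset \times \Yset$, interchanging the sums so that $\frac{1}{\sizeall}\sum_{i=1}^{\num}\size{i}\,\hlog{i}(y|x)$ appears and cancels $\havg(y|x)$ by definition, with the support assumption on $\havg$ justifying the manipulation. Your added observation that individual summands are not unbiased and that cancellation happens only in aggregate is a nice articulation of why this proof differs from the naive case, but it does not change the argument.
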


\begin{proof}
By linearity of expectation, 
\begin{align*}
  \mathbb{E}_{\Dall}[\Ubal] 
      &= \frac{1}{\sizeall} \sum_{i=1}^\num \sum_{j=1}^\size{i} \mathbb{E}_{x \sim \Pr(\Xset), y \sim \hlog{i}(\Yset|x)} \left[\frac{\delta(x, y) \htar(y | x)}{\havg(y | x)}\right] \\
      &= \frac{1}{\sizeall} \sum_{i=1}^\num \size{i} \sum_{\mathclap{\substack{x \in \Xset \\ y \in \Yset}}} \text{Pr}(x) \hlog{i}(y|x) \frac{\delta(x, y) \htar(y | x)}{\havg(y|x)} \\
&= \frac{1}{\sizeall} \sum_{x \in \mathcal{X}, y \in \mathcal{Y}} \frac{\text{Pr}(x)\delta(x, y) \htar(y | x)}{\havg(y|x)}\sum_{i=1}^\num \size{i} \hlog{i}(y|x) \\
&= \frac{1}{\sizeall} \sum_{x \in \mathcal{X}, y \in \mathcal{Y}} \frac{\text{Pr}(x)\delta(x, y) \htar(y | x)}{\frac {\sum_{i=1}^\num \size{i} \hlog{i}(y |x)}{\sizeall}}\sum_{i=1}^\num \size{i} \hlog{i}(y|x) \\
&= \sum_{x \in \mathcal{X}, y \in \mathcal{Y}} \text{Pr}(x)\delta(x, y) \htar(y | x) \\
 &= \mathbb{E}_{x \sim Pr(\Xset)} \mathbb{E}_{y \sim \htar(\Yset|x)} [\delta(x, y)] \\
 &= \Utrue{\htar}.     
\end{align*}
The second equality is valid since $\havg$ has support for $\htar$. 
\end{proof}

The variance of $\Ubal$ can be computed as follows:
\begin{align*}
\mathrm{Var}_{\Dall} [\Ubal] &= \frac{1}{\sizeall^2} \sum_{i=1}^\num \size{i} \Bigg ( \sum_{\mathclap{\substack{x \in \Xset  \\ y \in \Yset}}} \frac{(\delta(x, y) \htar(y|x))^2}{\havg(y | x)^2} \hlog{i}(y|x) \Pr(x) \\
&- \bigg(\sum_{\mathclap{\substack{x \in \Xset  \\ y \in \Yset}}} \frac{(\delta(x, y) \htar(y|x))}{\havg(y | x)} \hlog{i}(y|x) \Pr(x) \bigg)^2 \Bigg).
\end{align*}

A direct consequence of Theorem 1 in \cite{veach1995optimally} is that the variance of the balanced estimator is bounded above by the variance of the naive estimator plus some positive term that depends on $\Utrue{\htar}$ and the log sizes $\size{i}$. 


Here, we provide a stronger result that does not require an extra positive term for the inequality to hold.

\begin{theorem}
Assume each logging policy $\hlog{i}$ has support for target $\htar$. We then have that
\begin{align*}
    \var{\Ubal} \leq \var{\Unaive}.
\end{align*}
\end{theorem}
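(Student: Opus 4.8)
The plan is to write out both variance formulas explicitly and show that their difference decomposes into two pieces, each of which is nonnegative by the Cauchy--Schwarz inequality.

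First I would rewrite the leading (``second moment'') term of $\var{\Ubal}$. The key identity is that, by definition of $\havg$, we have $\sum_{i=1}^\num \size{i}\hlog{i}(y|x) = \sizeall\,\havg(y|x)$ for every pair $(x,y)$. Substituting this into the first summand of $\var{\Ubal}$ collapses the outer sum over $i$ and gives
\begin{align*}
\frac{1}{\sizeall^2} \sum_{i=1}^\num \size{i} \sum_{\substack{x \in \Xset \\ y \in \Yset}} \frac{(\delta(x,y)\htar(y|x))^2}{\havg(y|x)^2}\hlog{i}(y|x)\Pr(x) = \frac{1}{\sizeall} \sum_{\substack{x \in \Xset \\ y \in \Yset}} \frac{(\delta(x,y)\htar(y|x))^2}{\havg(y|x)}\Pr(x).
\end{align*}
Next I would compare this against the corresponding leading term of $\var{\Unaive}$. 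After exchanging the order of summation, it suffices to prove, for each fixed $(x,y)$, the pointwise inequality
\begin{align*}
\frac{1}{\sizeall^2}\sum_{i=1}^\num \frac{\size{i}}{\hlog{i}(y|x)} \;\geq\; \frac{1}{\sum_{i=1}^\num \size{i}\hlog{i}(y|x)},
\end{align*}
which is equivalent to $\big(\sum_i \size{i}\hlog{i}(y|x)\big)\big(\sum_i \size{i}/\hlog{i}(y|x)\big) \geq \sizeall^2$. This is exactly Cauchy--Schwarz applied to the vectors with components $\sqrt{\size{i}\hlog{i}(y|x)}$ and $\sqrt{\size{i}/\hlog{i}(y|x)}$, using $\sizeall = \sum_i \size{i}$. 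Thus the leading term of $\var{\Unaive}$ dominates that of $\var{\Ubal}$.

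It remains to handle the lower-order terms: the quantity $\Utrue{\htar}^2/\sizeall$ subtracted in $\var{\Unaive}$ versus $\frac{1}{\sizeall^2}\sum_i \size{i} C_i^2$ subtracted in $\var{\Ubal}$, where $C_i = \sum_{x,y} \frac{\delta(x,y)\htar(y|x)}{\havg(y|x)}\hlog{i}(y|x)\Pr(x)$ is the per-policy inner mean. Using the same identity $\sum_i \size{i}\hlog{i}(y|x) = \sizeall\,\havg(y|x)$ one shows $\sum_i \size{i} C_i = \sizeall\,\Utrue{\htar}$, i.e.\ $\Utrue{\htar}$ is the $\size{i}$-weighted mean of the $C_i$. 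Hence $\sum_i \size{i} C_i^2 \geq \sizeall\,\Utrue{\htar}^2$ by a second application of Cauchy--Schwarz (equivalently Jensen for $t\mapsto t^2$), so that $\frac{1}{\sizeall^2}\sum_i \size{i} C_i^2 - \Utrue{\htar}^2/\sizeall \geq 0$. Adding the two nonnegative pieces yields $\var{\Unaive} - \var{\Ubal} \geq 0$.

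The main obstacle is the bookkeeping that splits $\var{\Unaive} - \var{\Ubal}$ into two separately nonnegative pieces; a single monolithic comparison is not obviously sign-definite. The essential insight is that both pieces are instances of Cauchy--Schwarz, one applied pointwise in $(x,y)$ to the squared-utility mass and the other applied across logging policies to the per-policy means. The support assumption on each $\hlog{i}$ is exactly what keeps the naive variance finite, so that the comparison is well-defined.
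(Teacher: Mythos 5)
Your proof is correct and takes essentially the same approach as the paper's: you split $\var{\Unaive} - \var{\Ubal}$ into the same two nonnegative pieces, proving the second-moment comparison pointwise in $(x,y)$ by Cauchy--Schwarz (the paper's Equation~\eqref{eq:two}) and the comparison of the subtracted mean terms by a second Cauchy--Schwarz across logging policies (the paper's Equation~\eqref{eq:one}). The only cosmetic difference is that the paper first reduces to $\size{i}=1$ by relabeling samples as coming from distinct policies, whereas you keep general $\size{i}$ and apply the weighted forms of the same inequalities.
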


\begin{proof} 
From Equation~\ref{eq:naiveipsvar}, we have the following expression.

\begin{align*}
\mathrm{Var}_{\Dall} [\Unaive]&  \\ 
= & \frac{1}{\sizeall^2} \sum_{i=1}^\num \size{i} \bigg( \sum_{\mathclap{\substack{x \in \Xset  \\ y \in \Yset}}} \frac{(\delta(x, y) \htar(y|x))^2}{\h_i(y | x)} \Pr(x) - \Utrue{\htar}^2 \bigg). \nonumber
\end{align*}


For convenience, and without loss of generality, assume $\size{i} = 1$ $\forall i$, and therefore, $\sizeall = \num$. This is easily achieved by re-labeling the logging policies so that each data-sample comes from a distinctly labeled policy (note that we don't need the logging policies to be distinct in our setup).  Also, for simplicity, let $c(x,y) = \delta(x,y) \htar(y|x)$. Then 

\begin{align*}
&\mathrm{Var}_{\Dall} [\Unaive] \geq \mathrm{Var}_{\Dall} [\Ubal] \\
&\Leftrightarrow \sum_{\mathclap{\substack{x \in \Xset  \\ y \in \Yset}}} c^2(x,y) \Pr(x) 
\big ( \sum_{i=1}^{\num} \frac{1}{\hlog{i}(y|x)} \big ) - \num \Utrue{\htar}^2 \\
&\geq \sum_{\mathclap{\substack{x \in \Xset  \\ y \in \Yset}}} \frac{c^2(x,y) \Pr(x)}{\havg(y|x)^2} \bigg( \sum_{i=1}^{\num} \hlog{i}(y|x) \bigg) - \sum_{i=1}^{\num} \bigg(\sum_{\mathclap{\substack{x \in \Xset  \\ y \in \Yset}}} \frac{c(x, y) \Pr(x)}{\havg(y | x)} \hlog{i}(y|x) \bigg)^2   
\end{align*} 

Thus, it is sufficient to show the following two inequalities
\begin{align} \label{eq:one}
\sum_{i=1}^{\num} \bigg(\sum_{\mathclap{\substack{x \in \Xset  \\ y \in \Yset}}} \frac{c(x, y) \Pr(x)}{\havg(y | x)} \hlog{i}(y|x) \bigg)^2 \geq \num \Utrue{\htar}^2  
\end{align}
and for all relevant $x, y$
\begin{align} \label{eq:two}
\sum_{i=1}^{\num} \frac{1}{\hlog{i}(y|x)} \geq \frac{1}{\havg(y|x)^2} \bigg( \sum_{i=1}^{\num} \hlog{i}(y|x) \bigg)
\end{align}

We get Equation~\ref{eq:one} by applying Cauchy-Schwarz as follows
\begin{align*}
&\bigg( \sum_{i=1}^{\num} 1^2 \bigg) \bigg( \sum_{i=1}^{\num} \bigg(\sum_{\mathclap{\substack{x \in \Xset  \\ y \in \Yset}}} \frac{c(x, y) \Pr(x)}{\havg(y | x)} \hlog{i}(y|x) \bigg)^2 \bigg) \\
&\geq \bigg( \sum_{i=1}^{\num} \sum_{\mathclap{\substack{x \in \Xset  \\ y \in \Yset}}} \frac{c(x, y) \Pr(x)}{\havg(y | x)} \hlog{i}(y|x) \bigg)^2 \\ 
&\Rightarrow \sum_{i=1}^{\num} \bigg(\sum_{\mathclap{\substack{x \in \Xset  \\ y \in \Yset}}} \frac{c(x, y) \Pr(x)}{\havg(y | x)} \hlog{i}(y|x) \bigg)^2 \\
&\geq \bigg( \sum_{\mathclap{\substack{x \in \Xset  \\ y \in \Yset}}} \frac{c(x, y) \Pr(x)}{\frac{1}{\num} \sum_{i=1}^{\num} \hlog{i}(y|x)} \sum_{i=1}^{\num} \hlog{i}(y|x) \bigg)^2 = \num \Utrue{\htar}^2   
\end{align*}


Another application of Cauchy-Schwarz gives us Equation~\ref{eq:two} in the following way
\begin{align*}
&\bigg(\sum_{i=1}^{\num} \frac{1}{\hlog{i}(y|x)} \bigg) \bigg( \sum_{i=1}^{\num} \hlog{i}(y|x) \bigg) \geq \num^2 \\
&\Rightarrow  \sum_{i=1}^{\num} \frac{1}{\hlog{i}(y|x)} \geq \frac{1}{(\frac{1}{\num} \sum_{i=1}^{\num} \hlog{i}(y|x))^2} \sum_{i=1}^{\num} \hlog{i}(y|x) \\
&= \frac{1}{\havg(y|x)^2} \bigg( \sum_{i=1}^{\num} \hlog{i}(y|x) \bigg)
\end{align*}

\end{proof}


Returning to our toy example in Table~\ref{tab:toy}, we can check the variance reduction provided by $\Ubal$ over $\Unaive$. The variance of the Balanced IPS Estimator is $\var{\Ubal} \approx 12.43$, which is substantially smaller than $\var{\Unaive} \approx 64.27$ for the naive estimator using all the data $\Dall = \D{1} \bigcup \D{2}$. However, the Balanced IPS Estimator still improves when removing $\D{1}$. In particular, notice that when using only $\D{2}$, the variance of the Balanced IPS Estimator is $\var{\Ubal} = \var{\Unaive} \approx 4.27 < 12.43$.  Therefore, even the variance of $\Ubal$ can be improved in some cases by dropping data.

\section{Weighted IPS Estimator}

We have seen that the variances of both the Naive and the Balanced IPS estimators can be reduced by removing some of the data points. More generally, we now explore estimators that re-weight samples from various logging policies based on their relationship with the target policy. This is similar to ideas that are used in Adaptive Multiple Importance Sampling~\cite{cornuet2012adaptive,elvira2015generalized} where samples are also re-weighted in each sampling round. In contrast to the latter scenario, here we assume the logging policies to be fixed, and we derive closed-form formulas for variance-optimal estimators. The general idea of the weighted estimators that follow is to compute a weight for each logging policy that captures the mismatch between this policy and the target policy. In order to characterize the relationship between a logging policy and the new policy to be evaluated, we define the following \emph{divergence}. This formalizes the notion of mismatch between the two policies in terms of the Naive IPS Estimator variance. 

\begin{definition}[Divergence] Suppose policy $\h$ has support for target policy $\htar$. Then the divergence from $\h$ to $\htar$ is 
\begin{align*}
\diverg{\htar}{\h} &\equiv \mathrm{Var}_{x \sim \Pr(\Xset), y \sim \h(\Yset|x)} \left[\frac{\delta(x,y) \htar(y|x)}{\h(y|x)}\right] \\
& = \sum_{\mathclap{\substack{x \in \Xset  \\ y \in \Yset}}}  \frac{(\delta(x, y) \htar(y|x))^2}{\h(y | x)} \Pr(x) - \Utrue{\htar}^2 .
\end{align*}
Recall that $\Utrue{\htar}$ is the utility of policy $\htar$.
\label{def:div}
\end{definition}

Note that $\diverg{\htar}{\h}$ is not necessarily minimal when $\h = \htar$. In fact, it can easily be seen by direct substitution that $\diverg{\htar}{\htar_{imp}} = 0$ where $\htar_{imp}$ is the optimal importance sampling distribution for $\htar$ with $\htar_{imp} (y|x) \propto \delta(x,y) \htar(y|x)$. Nevertheless, informally, the divergence from a logging policy to the target policy is small when the logging policy assigns similar propensities to $(x,y)$ pairs as the importance sampling distribution for the target policy. Conversely, if the logging policy deviates significantly from the importance sampling distribution, then the divergence is large. 
Based on this notion of divergence, we propose the following weighted estimator:

\begin{definition}[Weighted IPS Estimator] Assume $\diverg{\htar}{\hlog{i}} > 0$ for all $1 \leq i \leq \num$. 
\begin{align*}
  \Uwt = \sum_{i=1}^\num \wtopt{i} \sum_{j=1}^\size{i} \frac{\del{i}{j} \htar(\y{i}{j} | \x{i}{j})}{\p{i}{j}}
\end{align*}
where the weights $\wtopt{i}$ are set to
\begin{align}
\label{eq:weighted_optimal_weights}
    \wtopt{i} &= \frac{1}{\diverg{\htar}{\hlog{i}}\sum_{j=1}^\num \frac{\size{j}}{\diverg{\htar}{\hlog{j}}}}. 
\end{align}

\end{definition}

Note that the assumption $\diverg{\htar}{\hlog{i}} > 0$ is easily satisfied as long as the logging policy is not exactly equal to the optimal importance sampling distribution of the target policy $\htar$. 
This is very unlikely given that the utility of the new policy is unknown to us in the first place.

We will show that the Weighted IPS Estimator is optimal in the sense that any other convex combination by $\wt{i}$ that ensures unbiasedness does not give a smaller variance estimator. First, we have a simple condition for unbiasedness:

\begin{proposition}[Bias of Weighted IPS Estimator] \label{prop:biasweighted}
Assume each logging policy $\hlog{i}$ has support for target policy $\htar$. Consider the estimator
\begin{align*}
    \Ugenweight = \sum_{i=1}^\num \wt{i} \sum_{j=1}^\size{i} \frac{\del{i}{j} \htar(\y{i}{j} | \x{i}{j})}{\p{i}{j}}
\end{align*}
such that $\wt{i} \geq 0$ and $\sum_{i=1}^\num \wt{i} \size{i} = 1$. For $\Dall$ consisting of i.i.d. draws from $\Pr(\Xset)$ and logging policies $\hlog{i}(\Yset|x)$, the above estimator is unbiased:
\begin{align*}
\mathbb{E}_{\Dall} [\Ugenweight] 
= \Utrue{\htar}.     
\end{align*}
In particular, $\Uwt$ is unbiased.
\end{proposition}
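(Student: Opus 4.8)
The plan is to follow exactly the template used in the bias proofs for $\Unaive$ and $\Ubal$: apply linearity of expectation, use the fact that each individual inverse-propensity term is unbiased regardless of which logging policy produced it, and then invoke the normalization constraint $\sum_{i=1}^\num \wt{i} \size{i} = 1$ to collapse the resulting weighted average back to $\Utrue{\htar}$.

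First I would expand $\mathbb{E}_{\Dall}[\Ugenweight]$ by linearity of expectation, pulling the fixed weights $\wt{i}$ outside and splitting the double sum into individual expectations of the terms $\frac{\del{i}{j}\, \htar(\y{i}{j}|\x{i}{j})}{\p{i}{j}}$. Since $\x{i}{j} \sim \Pr(\Xset)$ and $\y{i}{j} \sim \hlog{i}(\Yset|\x{i}{j})$, the expectation of each such term is precisely the quantity computed in the proof of Proposition~\ref{Lem:unaiveunbiased}: writing it as $\sum_{x,y} \Pr(x)\, \hlog{i}(y|x)\, \frac{\delta(x,y)\htar(y|x)}{\hlog{i}(y|x)}$, the propensity $\hlog{i}(y|x)$ cancels wherever $\delta(x,y)\htar(y|x) \neq 0$ — which is legitimate precisely because $\hlog{i}$ has support for $\htar$ — leaving $\sum_{x,y}\Pr(x)\delta(x,y)\htar(y|x) = \Utrue{\htar}$. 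Thus every one of the $\size{i}$ inner terms contributes $\Utrue{\htar}$, independent of $i$ and $j$.

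Next I would reassemble the pieces: the inner sum over $j$ yields $\size{i}\,\Utrue{\htar}$, so that $\mathbb{E}_{\Dall}[\Ugenweight] = \sum_{i=1}^\num \wt{i}\,\size{i}\,\Utrue{\htar} = \Utrue{\htar}\sum_{i=1}^\num \wt{i}\,\size{i}$, and the hypothesis $\sum_{i=1}^\num \wt{i}\,\size{i} = 1$ immediately gives $\mathbb{E}_{\Dall}[\Ugenweight] = \Utrue{\htar}$. To obtain the ``in particular'' clause for $\Uwt$, I would then verify that the optimal weights $\wtopt{i}$ from Equation~(\ref{eq:weighted_optimal_weights}) are admissible. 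Nonnegativity is clear because each $\diverg{\htar}{\hlog{i}} > 0$ by assumption, so every $\wtopt{i} > 0$; and a one-line computation shows $\sum_{i=1}^\num \wtopt{i}\,\size{i} = \frac{\sum_{i=1}^\num \size{i}/\diverg{\htar}{\hlog{i}}}{\sum_{j=1}^\num \size{j}/\diverg{\htar}{\hlog{j}}} = 1$, since the common denominator factor is exactly the numerator. Hence $\wtopt{i}$ satisfies both conditions and $\Uwt$ is unbiased as a special case.

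There is no genuine obstacle here; the argument is routine once one observes that per-sample importance weighting makes each datapoint individually unbiased no matter which $\hlog{i}$ generated it, so the weights serve only to form a mean-preserving average under the constraint $\sum_i \wt{i}\size{i} = 1$. The single point that requires care is the support condition, which is what justifies cancelling $\hlog{i}(y|x)$ in the denominator; I would state explicitly, as in the earlier proofs, that this cancellation is valid only on the set where $\delta(x,y)\htar(y|x)\neq 0$, on which support guarantees $\hlog{i}(y|x) > 0$.
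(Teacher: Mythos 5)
Your proposal is correct and matches the paper's own proof essentially step for step: linearity of expectation, per-sample unbiasedness of each importance-weighted term (following the proof of Proposition~\ref{Lem:unaiveunbiased}), and the constraint $\sum_{i=1}^\num \wt{i}\size{i}=1$ to collapse the sum, with the verification that $\wtopt{i}$ satisfies the constraint for the ``in particular'' clause. Your explicit remarks on the support condition and the positivity of $\wtopt{i}$ are slightly more detailed than the paper's one-line treatment but introduce nothing different in substance.
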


\begin{proof} Following the proof of Proposition~\ref{Lem:unaiveunbiased},
\begin{align*}
\mathbb{E}_{\Dall}[\Ugenweight] 
&=  \sum_{i=1}^\num \wt{i} \sum_{j=1}^\size{i} \mathbb{E}_{x \sim \Pr(\Xset), y \sim \hlog{i}(\Yset|x)} \bigg[\frac{\delta(x, y) \htar(y | x)}{\hlog{i}(y | x)}\bigg] \\
 &= \Utrue{\htar} \sum_{i=1}^\num \wt{i} \size{i}  = \Utrue{\htar}.      
\end{align*}
Moreover, $\sum_{i=1}^\num \wtopt{i} \size{i} = 1$, which implies $\Uwt$ is unbiased.
\end{proof}

Notice that making the weights equal reduces $\Ugenweight$ to $\Unaive$. Furthermore, dropping samples from logging policy $\hlog{i}$ is equivalent to setting $\wt{i} = 0$. 

To prove variance optimality, note that the variance of the Weighted IPS Estimator for a given set of weights $\wt{1}, ..., \wt{\num}$ can be written in terms of the divergences.
\begin{align}
\label{eq:weighted_var}
\mathrm{Var}_{\Dall} [\Ugenweight] 
= \sum_{i=1}^\num \wt{i}^2 \size{i} \diverg{\htar}{\hlog{i}}.
\end{align}

We now prove the following theorem:
\begin{theorem} 
Assume each logging policy $\hlog{i}$ has support for target policy $\htar$, and $\diverg{\htar}{\hlog{i}} > 0$. Then, for any estimator of the form $\Ugenweight$ as defined in Proposition \ref{prop:biasweighted}
\begin{align*}
    \var{\Uwt} =  \frac{1}{\sum_{i=1}^\num \frac{\size{i}}{\diverg{\htar}{\hlog{i}}}} \leq \var{\Ugenweight}.
\end{align*}
\end{theorem}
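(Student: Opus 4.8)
The plan is to treat this as a constrained minimization problem. By the unbiasedness condition from Proposition~\ref{prop:biasweighted}, the admissible weight vectors $(\wt{1},\dots,\wt{\num})$ are exactly those with $\wt{i}\ge 0$ and $\sum_{i=1}^\num \wt{i}\size{i}=1$, and Equation~(\ref{eq:weighted_var}) already tells us that for any such vector
\[
\var{\Ugenweight}=\sum_{i=1}^\num \wt{i}^2\,\size{i}\,\diverg{\htar}{\hlog{i}}.
\]
So the whole theorem is really the statement that this quadratic form, subject to the affine constraint $\sum_i \wt{i}\size{i}=1$, is minimized at $\wt{i}=\wtopt{i}$ with minimum value $1/(\sum_i \size{i}/\diverg{\htar}{\hlog{i}})$. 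Writing $D_i=\diverg{\htar}{\hlog{i}}$ for brevity (each $D_i>0$ by assumption, and each $\size{i}>0$), the argument splits cleanly into an equality and an inequality.

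First I would verify the claimed closed form by direct substitution: plugging $\wtopt{i}=1/(D_i\sum_{j}\size{j}/D_j)$ into the formula above, the common factor $\sum_j \size{j}/D_j$ comes out squared in the denominator while the surviving sum $\sum_i \size{i}/D_i$ cancels one of its powers, leaving $\var{\Uwt}=1/(\sum_i \size{i}/D_i)$. This is routine bookkeeping and establishes the equality in the theorem statement, so it remains only to show the lower bound $\var{\Ugenweight}\ge 1/(\sum_i \size{i}/D_i)$ for every admissible $\wt{}$.

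The core of the proof is this lower bound, which I would obtain from Cauchy-Schwarz, mirroring the technique already used in the balanced-estimator theorem. The key move is to split the constraint term so that Cauchy-Schwarz ties it to the objective: taking $a_i=\wt{i}\sqrt{\size{i}D_i}$ and $b_i=\sqrt{\size{i}/D_i}$ gives $a_ib_i=\wt{i}\size{i}$, $a_i^2=\wt{i}^2\size{i}D_i$, and $b_i^2=\size{i}/D_i$. Then
\[
\Big(\sum_{i=1}^\num \wt{i}\size{i}\Big)^2 \le \Big(\sum_{i=1}^\num \wt{i}^2\,\size{i}\,D_i\Big)\Big(\sum_{i=1}^\num \frac{\size{i}}{D_i}\Big).
\]
Because the unbiasedness constraint forces the left-hand side to equal $1$, dividing through by $\sum_i \size{i}/D_i$ yields $\var{\Ugenweight}\ge 1/(\sum_i \size{i}/D_i)$, which is exactly $\var{\Uwt}$ from the first step. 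Combining the two steps proves the theorem.

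The only real obstacle is spotting the correct Cauchy-Schwarz split; everything after that is mechanical. A couple of remarks worth noting in the writeup: the nonnegativity assumption $\wt{i}\ge 0$ is never actually used in the bound, since Cauchy-Schwarz holds for arbitrary real weights obeying the linear constraint, so the argument in fact certifies that the weights $\wtopt{i}$ are optimal within the larger class of all unbiased linear reweightings, not merely the convex ones. As a consistency check, equality in Cauchy-Schwarz requires $a_i\propto b_i$, i.e.\ $\wt{i}D_i$ constant across $i$; this is precisely the defining form of $\wtopt{i}$ (for which $\wtopt{i}D_i=1/\sum_j \size{j}/D_j$ is indeed constant), confirming that the bound is tight and attained at the proposed weights. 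An equivalent derivation via Lagrange multipliers on the convex program gives the same stationarity condition $2\wt{i}\size{i}D_i=\mu\size{i}$, hence the same $\wtopt{i}$, but the Cauchy-Schwarz route is shorter and stylistically consistent with the preceding proofs.
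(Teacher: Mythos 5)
Your proposal is correct and takes essentially the same route as the paper's proof: verifying the closed form $\var{\Uwt} = 1/\sum_{i=1}^\num \frac{\size{i}}{\diverg{\htar}{\hlog{i}}}$ by direct substitution of $\wtopt{i}$ into Equation~\eqref{eq:weighted_var}, then applying Cauchy--Schwarz with exactly the same split so that $\bigl(\sum_i \wt{i}\size{i}\bigr)^2 = 1$ bounds $\var{\Ugenweight}$ from below. Your side remarks --- that nonnegativity of the $\wt{i}$ is never used and that the Cauchy--Schwarz equality condition recovers $\wtopt{i}$ --- are accurate refinements the paper leaves implicit, but the core argument is identical.
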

\begin{proof} The expression for the variance of $\Uwt$ can be verified to be as stated by directly substituting $\wtopt{i}$~\eqref{eq:weighted_optimal_weights} into the variance expression in Equation \eqref{eq:weighted_var}. Next, by the Cauchy-Schwarz inequality, 
\begin{align*}
  &\left(\sum_{i=1}^\num \wt{i}^2 \size{i} \diverg{\htar}{\hlog{i}} \right) \left(\sum_{i=1}^\num \frac{\size{i}}{\diverg{\htar}{\hlog{i}}}\right) \geq \left(\sum_{i=1}^\num \wt{i} \size{i}\right)^2 = 1  \\
  &\Rightarrow \var{\Ugenweight} \geq \var{\Uwt}
\end{align*}
\end{proof}

Returning to the toy example in Table~\ref{tab:toy}, the divergence values are $\diverg{\htar}{\hlog{1}} \approx 252.81$ and $\diverg{\htar}{\hlog{2}} \approx 4.27$. This leads to weights $\wtopt{1} \approx 0.02$ and $\wtopt{2} \approx 0.98$, resulting in $\var{\Uwt} \approx 4.19 < 4.27$ on $\Dall = \D{1} \bigcup \D{2}$. Thus, the weighted IPS estimator does better than the naive IPS estimator (including the case when $\D{1}$ is dropped) by optimally weighting all the available data. 

Note that computing the optimal weights $\wt{i}$ exactly requires access to the utility function $\delta$ everywhere in order to compute the divergences $\diverg{\htar}{\hlog{i}}$. However, in practice, $\delta$ is only known at the collected data samples, and the weights must be estimated. In Section~\ref{sec:weightest} we discuss a simple strategy for doing so, along with an empirical analysis of the procedure.

\subsection{Quantifying the Variance Reduction}
\label{sec:varred}

The extent of variance reduction provided by the Weighted IPS Estimator over the Naive IPS Estimator depends only on the relative proportions of divergences and the log data sizes of each logging policy. The following proposition quantifies the variance reduction.

\begin{proposition}
Let $v_i = \frac{\diverg{\htar}{\hlog{i}}}{\diverg{\htar}{\hlog{\num}}}$ be the ratio of divergences and $r_i = \frac{\size{i}}{\size{\num}}$ be the ratio of sample sizes of policy $i$ and policy $\num$. Then the reduction denoted as $\gamma$ is
\begin{displaymath}
  \gamma \equiv \frac{\var{\Uwt}}{\var{\Unaive}} = \frac{(\sum_{i=1}^\num r_i)^2}{(\sum_{i=1}^\num r_i v_i) (\sum_{i=1}^\num \frac{r_i}{v_i})} \leq 1.
\end{displaymath}
\end{proposition}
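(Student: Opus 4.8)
The statement has two parts: the closed-form expression for $\gamma$ and the bound $\gamma \le 1$. My plan is to obtain the expression by substituting the two variances' closed forms into the ratio, and then to obtain the bound by recognizing a Cauchy--Schwarz structure. Both closed forms are already available: the variance-optimality theorem gives $\var{\Uwt} = 1/\big(\sum_{i=1}^\num \size{i}/\diverg{\htar}{\hlog{i}}\big)$, and Equation~(\ref{eq:naiveipsvar}) together with Definition~\ref{def:div} gives $\var{\Unaive} = \frac{1}{\sizeall^2}\sum_{i=1}^\num \size{i}\,\diverg{\htar}{\hlog{i}}$.

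For the equality I would first form the ratio
\begin{align*}
\gamma = \frac{\var{\Uwt}}{\var{\Unaive}} = \frac{\sizeall^2}{\big(\sum_{i=1}^\num \frac{\size{i}}{\diverg{\htar}{\hlog{i}}}\big)\big(\sum_{i=1}^\num \size{i}\,\diverg{\htar}{\hlog{i}}\big)},
\end{align*}
using $\sizeall = \sum_{i=1}^\num \size{i}$. The next step is a pure change of variables: divide the numerator and each of the two sums in the denominator by the appropriate powers of $\size{\num}$ and $\diverg{\htar}{\hlog{\num}}$, so that every $\size{i}$ becomes $r_i \size{\num}$ and every $\diverg{\htar}{\hlog{i}}$ becomes $v_i\,\diverg{\htar}{\hlog{\num}}$. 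The overall factors $\size{\num}^2$ and $\diverg{\htar}{\hlog{\num}}$ cancel between numerator and denominator, leaving exactly $(\sum_i r_i)^2 / \big[(\sum_i r_i v_i)(\sum_i r_i/v_i)\big]$, which is the claimed formula.

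For the bound I would apply Cauchy--Schwarz to the vectors with entries $\sqrt{r_i v_i}$ and $\sqrt{r_i/v_i}$; this is legitimate because the support assumption and $\diverg{\htar}{\hlog{i}} > 0$ guarantee $r_i > 0$ and $v_i > 0$, so all square roots are real and positive. Their coordinatewise product is $\sqrt{r_i v_i \cdot r_i/v_i} = r_i$, so Cauchy--Schwarz yields $(\sum_i r_i)^2 \le (\sum_i r_i v_i)(\sum_i r_i/v_i)$, which is precisely $\gamma \le 1$.

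Neither step is a genuine obstacle: the equality is bookkeeping and the bound is a single application of Cauchy--Schwarz (note that equality, i.e.\ no variance reduction, holds exactly when all $v_i$ are equal, matching the intuition that re-weighting helps only when the divergences differ). The one point deserving care is ensuring the normalization factors cancel cleanly, so I would track the powers of $\size{\num}$ and $\diverg{\htar}{\hlog{\num}}$ explicitly rather than informally.
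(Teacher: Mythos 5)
Your proposal is correct and follows the paper's own argument essentially verbatim: substitute the closed-form variances $\var{\Uwt} = 1/\bigl(\sum_i \size{i}/\diverg{\htar}{\hlog{i}}\bigr)$ and $\var{\Unaive} = \frac{1}{\sizeall^2}\sum_i \size{i}\,\diverg{\htar}{\hlog{i}}$, normalize by $\size{\num}$ and $\diverg{\htar}{\hlog{\num}}$, and finish with Cauchy--Schwarz to get $\gamma \le 1$. Your explicit choice of vectors $\sqrt{r_i v_i}$ and $\sqrt{r_i/v_i}$ and the equality condition (all $v_i$ equal) are slightly more detailed than the paper's one-line invocation, but the route is identical.
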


\begin{proof}
Substituting the expressions for the two variances, we get that
\begin{displaymath}
  \frac{\var{\Uwt}}{\var{\Unaive}} = \frac{(\sum_{i=1}^\num \size{i})^2}{(\sum_{i=1}^\num \size{i} \diverg{\htar}{\hlog{i}}) (\sum_{i=1}^\num \frac{\size{i}}{\diverg{\htar}{\hlog{i}}})} 
\end{displaymath}

So, normalizing by $\diverg{\htar}{\hlog{n}}$ and $\size{n}$, gives the desired expression. Applying the Cauchy-Schwarz inequality gives the upper bound. 
\end{proof}

For the case of just two logging policies, $n = 2$, it is particularly easy to compute the maximum improvement in variance of the Weighted IPS Estimator over the Naive estimator. The reduction $\gamma$ is $\gamma = \frac{(r_1 + 1)^2 v_1}{(r_1 v_1 + 1)(r_1 + v_1)}$,  which ranges between $0$ and $1$ depending on $r_1$ and $v_1$. The benefit of the weighted estimator over the naive estimator is greatest when the logging policies differ substantially, and there are equal amounts of log data from the two logging policies. Intuitively, this is because the weighted estimator mitigates the defect in the naive estimator due to which abundant high variance samples drown out the signal from the equally abundant low variance samples. On the other hand, the scope for improvement is less when the logging policies are similar or when there are disproportionately many samples from one logging policy.

\section{Empirical Analysis}

In this section, we empirically examine the properties of the proposed estimators. To do this, we create a controlled setup in which we have logging policies of different utilities, and try to estimate the utility of a fixed new policy.
We illustrate key properties of our estimators in the concrete setting of CRF policies for multi-label classification, although the estimators themselves are applicable to arbitrary stochastic policies and structured output spaces.

\subsection{Setup}
\label{sec:setup}

\begin{table}[t]
    \begin{tabular}{rrrrr   }
        \toprule
        Name   & \# features &   \# labels & $n_{train}$ & $n_{test}$ \\
        \midrule
        Scene   & 294 & 6 & 1211 & 1196 \\
        Yeast   & 103  & 14 & 1500 & 917 \\
        LYRL    & 47236 & 4 & 23149 & 781265 \\
        \bottomrule
    \end{tabular}
    \caption{Corpus statistics for different multi-label datasets from the LibSVM repository. LYRL was post-processed so that only top level categories were treated as labels}
    \label{tab:datasets}
\end{table}    

\begin{figure*}[t]
    \centering
    \includegraphics[width=0.33\textwidth]{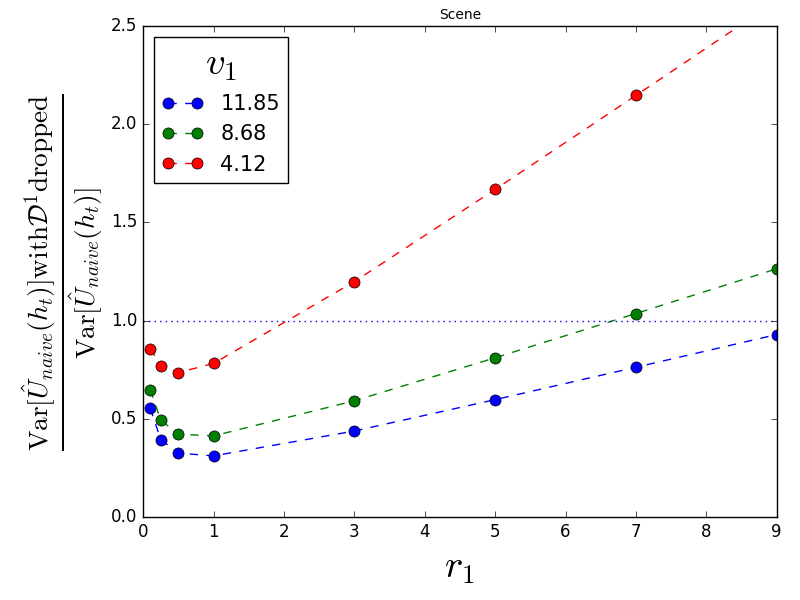} \hfill \includegraphics[width=0.33\textwidth]{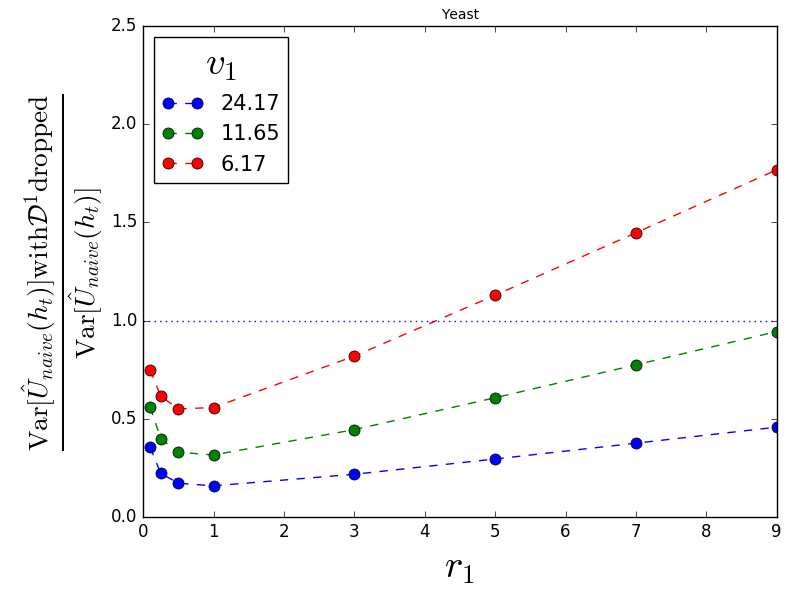} \hfill \includegraphics[width=0.33\textwidth]{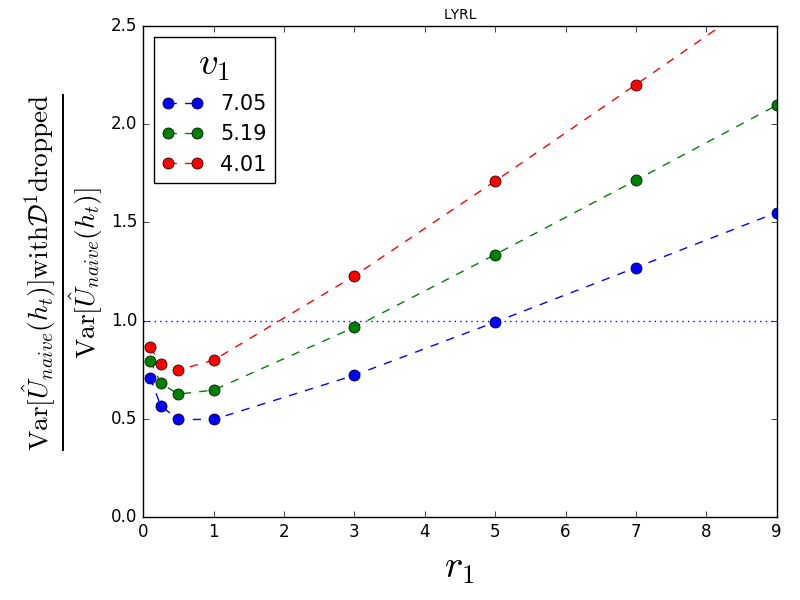}
    \caption{Variance of the Naive IPS Estimator using only $\hlog{2}$ relative to the variance of the Naive IPS Estimator using data from both $\hlog{1}$ and $\hlog{2}$ for different $\hlog{1}$ as the relative sample size changes. Dropping data can lower the variance of Naive IPS Estimator in many cases.}
    \label{fig:drop}
\end{figure*}

\begin{figure*}[t]
    \centering
    \includegraphics[width=0.33\textwidth]{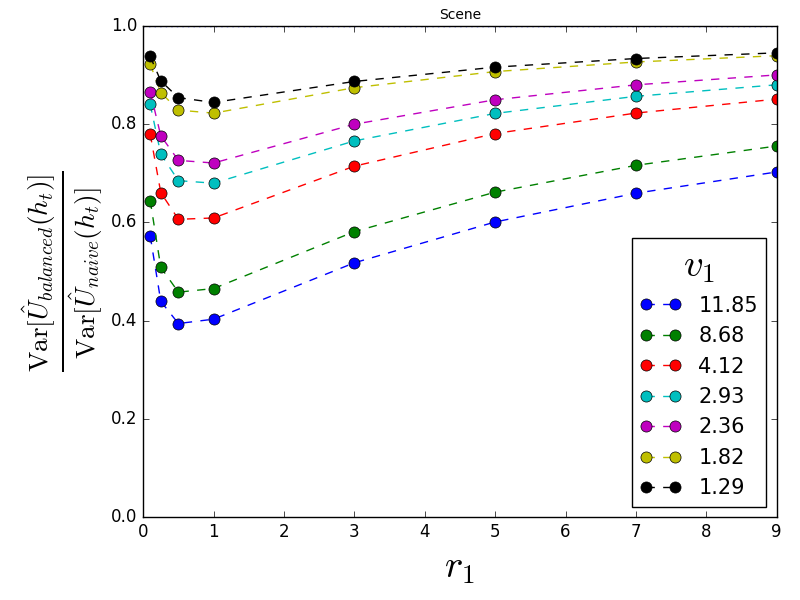} \hfill \includegraphics[width=0.33\textwidth]{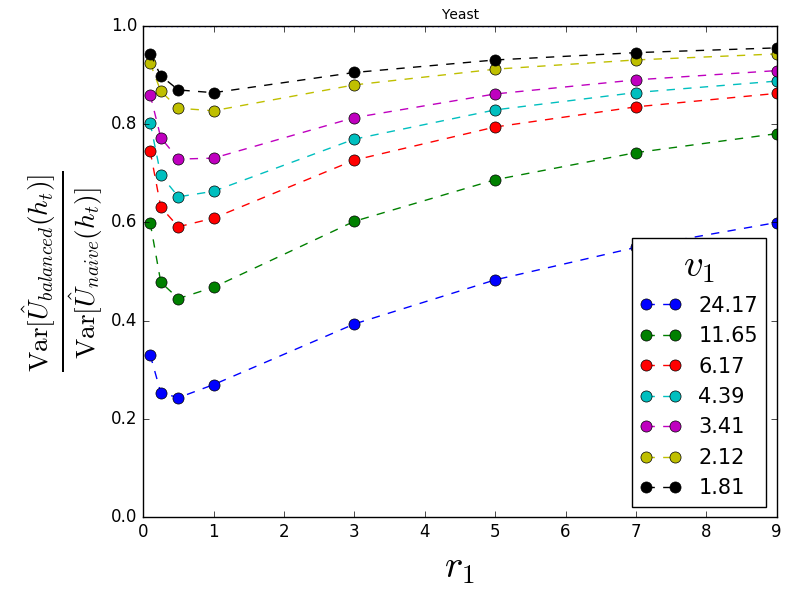} \hfill \includegraphics[width=0.33\textwidth]{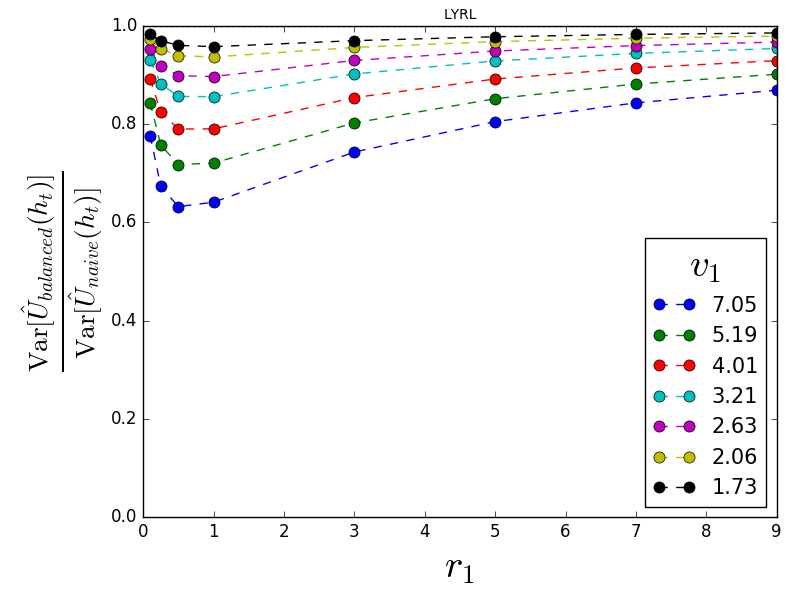}
    \caption{Variance of the Balanced IPS Estimator relative to the variance of the Naive IPS Estimator for different $\hlog{1}$ as the relative sample size changes. The Balanced IPS Estimator can have substantially smaller variance than the Naive IPS Estimator.}
    \label{fig:r2}
\end{figure*}

We choose multi-label classification for our experiments because of the availability of a rich feature space $\Xset$ and an easily scalable label space $\Yset$. Three multi-label datasets from the LibSVM repository with varying feature dimensionalities, number of class labels, and number of training samples available are used. The corpus statistics are as summarized in Table~\ref{tab:datasets}. 

Since these datasets involve multi-label classification, the output space is $\mathcal{Y} = \{0,1\}^q$, i.e., the set of all possible labels one can generate given a set of $q$ labels.
The input distribution $\Pr(\Xset)$ is the empirical distribution of inputs as represented in the test set. The utility function $\delta(x,y)$ is simply the number of correctly assigned labels in $y$ with respect to the given ground truth label $y^*$. 

To obtain policies with different utilities in a systematic manner, we train conditional random fields (CRFs) on incrementally varying fractions of the labeled training set. CRFs are convenient since they provide explicit probability distributions over possible predictions conditioned on an input. However, nothing in the following analysis is specific to using CRFs as the stochastic logging policies, and note that the target policy need not be stochastic at all.

For simplicity and ease of interpretability, we use two logging policies in the following experiments. To generate these logging policies, we vary the training fraction for the first logging policy $\hlog{1}$ over $0.02, 0.05, 0.08, 0.11, 0.14, 0.17, 0.20$, keeping the training fractions for the second logging policy $\hlog{2}$ fixed at $0.30$. Similarly, we generate a CRF classifier representing the target policy $\htar$ by training on $0.35$ fraction of the data. The effect is that we now get three policies where the second logging policy is similar to the target while the similarity of the first logging policy varies over a wide range. This results in a wide range of relative divergences 
\begin{equation*}
    v_1 = \frac{\diverg{\htar}{\hlog{1}}}{\diverg{\htar}{\hlog{2}}}
\end{equation*}
for the first logging policy on which the relative performance of the estimators depends.  

We compare pairs of estimators based on their relative variance since all the estimators being considered are unbiased (so, relative variance 1 signifies the estimators being compared have the same variance). Since the variance of the different estimators scales inversely proportional to the total number of samples, the ratio of their variances depends only on the relative size of the two data logs 
\begin{equation*}
    r_1 = \frac{\size{1}}{\size{2}},
\end{equation*}
but not on their absolute size. We therefore report results in terms of relative size where we vary $r_1 \in \{0.1, 0.25, 0.5, 1, 3, 5, 7, 9\}$ to explore a large range of data imbalances. 

For a fixed set of CRFs as logging and target policies, and the relative size of the data logs, the ratio of the variances of the different estimators can be computed exactly since the CRFs provide explicit distributions over $\Yset$, and $\Xset$ is based on the test set. We therefore report exact variances in the following. In addition to the exactly computed variances, we also did some bandit feedback simulations to verify the experiment setup. We employed the Supervised $\mapsto$ Bandit conversion method~\cite{Agarwal14tamingthe}. In this method, we iterate over the test features $x$, sample some prediction $y$ from the logging policy $\hlog{i}(\Yset|x)$ and record the corresponding loss and propensity to generate the logged data-sets $\D{i}$. For various settings of logging policies and amounts of data, we sampled bandit data and obtained estimator values over hundreds of iterations. We then computed the empirical mean and variance of the different estimates to make sure that the estimators were indeed unbiased and closely matched the theoretical variances reported above.

\begin{figure*}[t]
    \centering
    \includegraphics[width=0.33\textwidth]{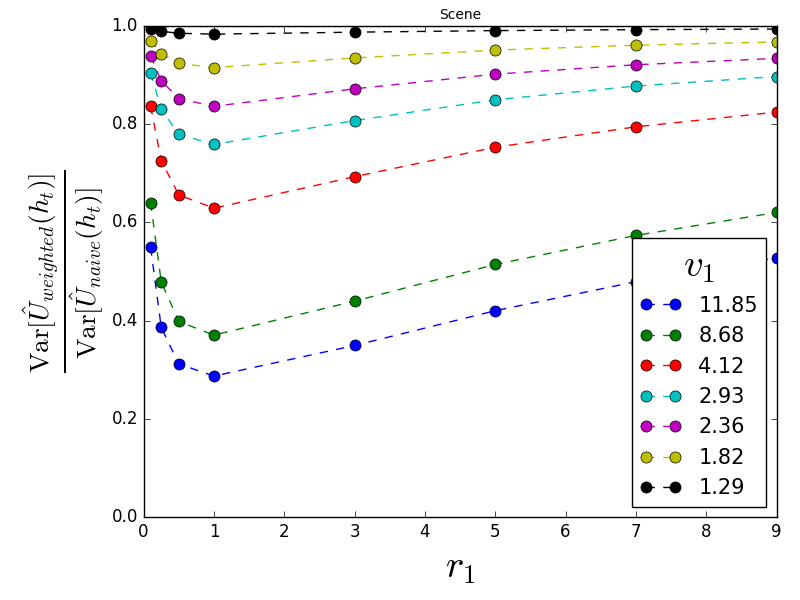} \hfill \includegraphics[width=0.33\textwidth]{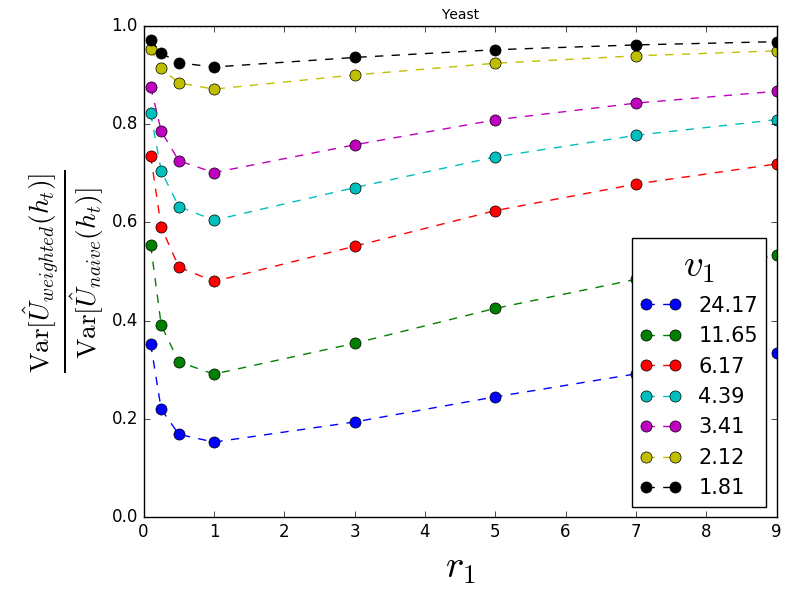} \hfill \includegraphics[width=0.33\textwidth]{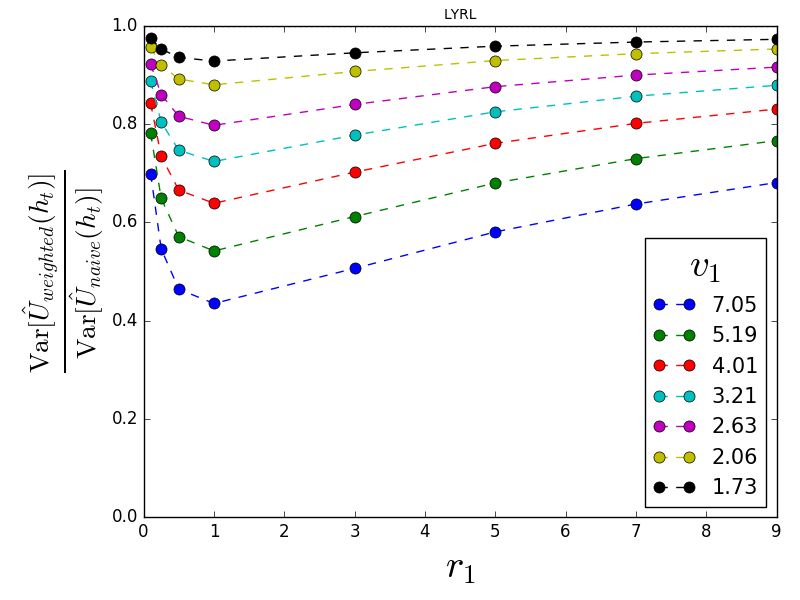}
    \caption{Variance of the Weighted IPS Estimator relative to the variance of the Naive IPS Estimator for different $\hlog{1}$ as the relative sample size changes. The Weighted IPS Estimator can have substantially smaller variance than the Naive IPS Estimator.}
    \label{fig:optr3}
\end{figure*}

\begin{figure*}[t]
    \centering
    \includegraphics[width=0.33\textwidth]{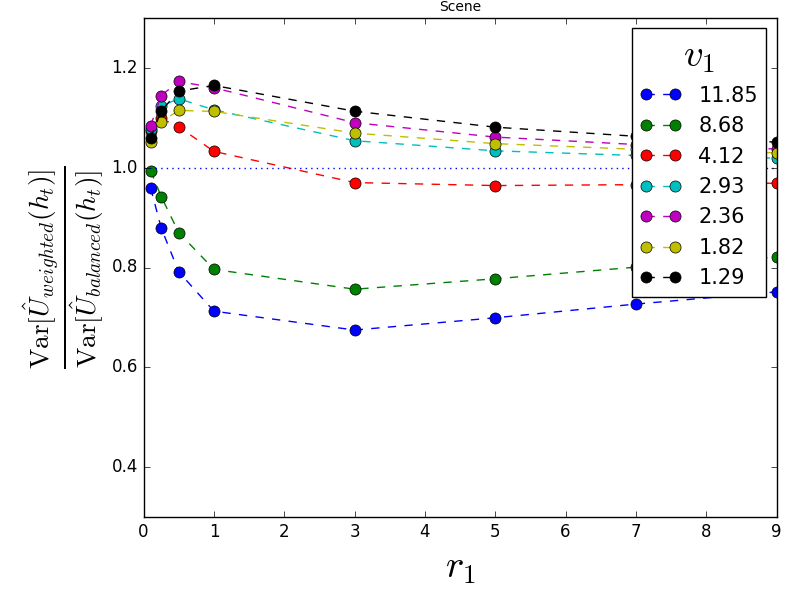} \hfill \includegraphics[width=0.33\textwidth]{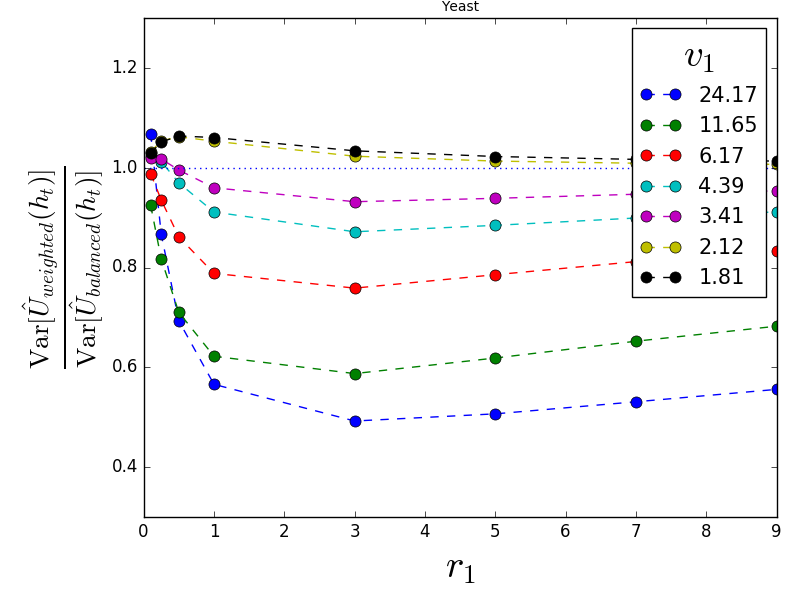} \hfill \includegraphics[width=0.33\textwidth]{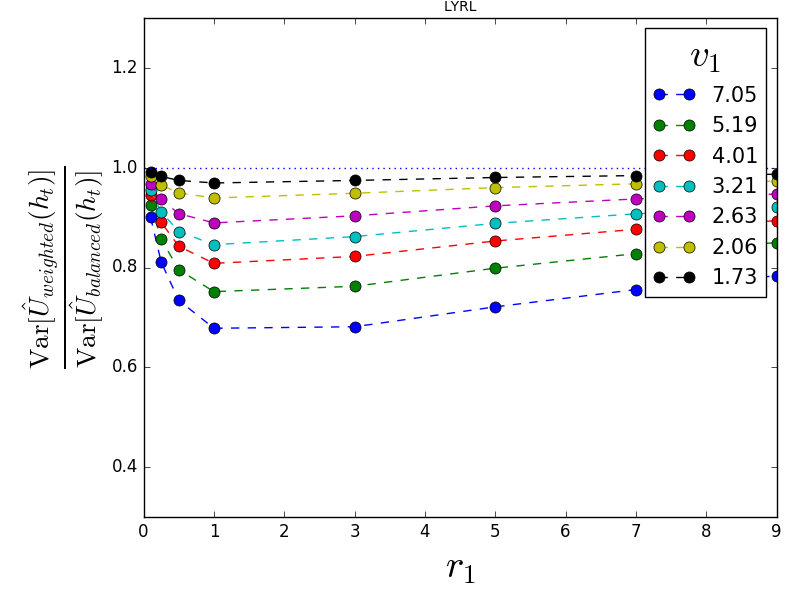}
    \caption{Variance of the Weighted IPS Estimator relative to the variance of the Balanced IPS Estimator for different $\hlog{1}$ as the relative sample size changes.
    The Weighted IPS Estimator does better than the Balanced IPS Estimator when the two logging policies differ significantly. However, the Balanced IPS Estimator performs better when the two policies are similar.}
    \label{fig:optr3r2}
\end{figure*}

\subsection{Can dropping data lower the variance of \texorpdfstring{$\Unaive$}{U-naive}?}

While we saw that dropping data improved the variance of the Naive IPS Estimator in the toy example, we first verify that this issue also surfaces outside of carefully constructed toy examples.
To this effect, Figure~\ref{fig:drop} plots the variance of the Naive IPS Estimator $\Unaive$ that uses data only from $\hlog{2}$ relative to the variance of $\Unaive$ when using data from both $\hlog{1}$ and $\hlog{2}$. The x-axis varies the relative amount of data coming from $\hlog{1}$ and $\hlog{2}$. Each solid circle on the plot corresponds to a training fraction choice for $\hlog{1}$ and a log-data-size ratio $r_1$. A lot-data-size ratio of 0 means that no data from $\hlog{1}$ is used, i.e., all data from $\hlog{1}$ is dropped. The relative divergence $v_1$ is higher when $\hlog{1}$ is trained on a lower fraction of training data since in that case $\hlog{1}$ differs more from $\hlog{2}$. A solid circle below the baseline at $1$ indicates that dropping data improves the variance in that case.

Overall, the experiments confirm that the Naive IPS Estimator shows substantial inefficiency. We observe that for high $v_1$ and small $r_1$, dropping data from $\hlog{1}$ can reduce the variance substantially for a wide range of realistic CRF policies. As $v_1$ decreases and $r_1$ increases, dropping data becomes less beneficial, ultimately becoming worse than the using all the data. This concurs with the intuition that dropping a relatively small number of high variance data samples can help utilize the low variance data samples.

\begin{figure*}[t]
    \centering
    \includegraphics[width=0.33\textwidth]{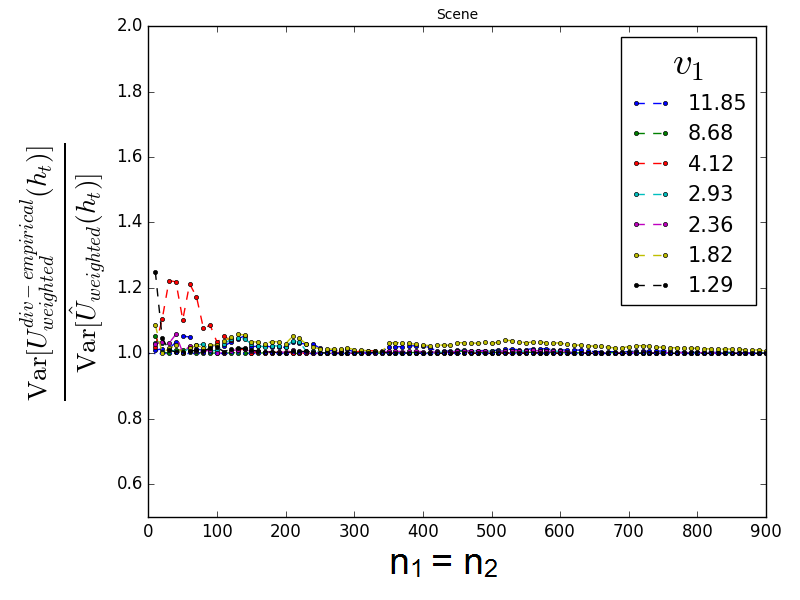} \hfill \includegraphics[width=0.33\textwidth]{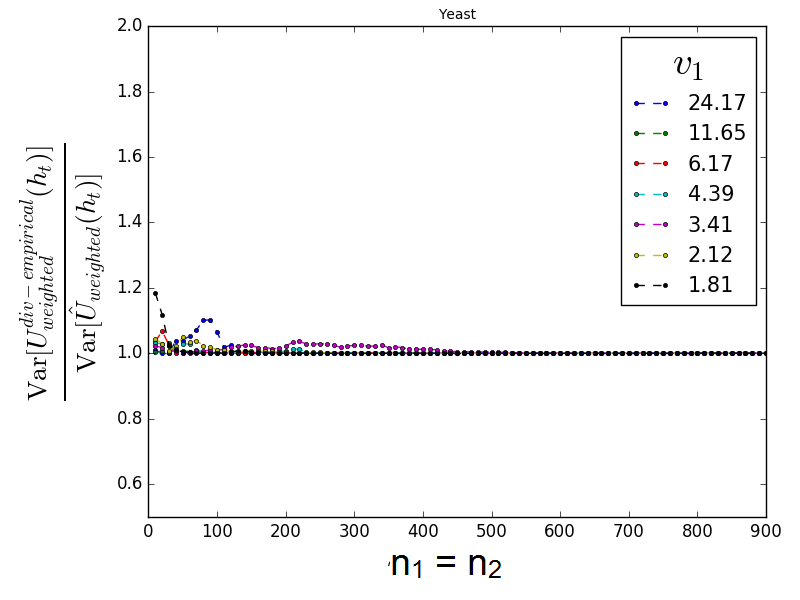} \hfill \includegraphics[width=0.33\textwidth]{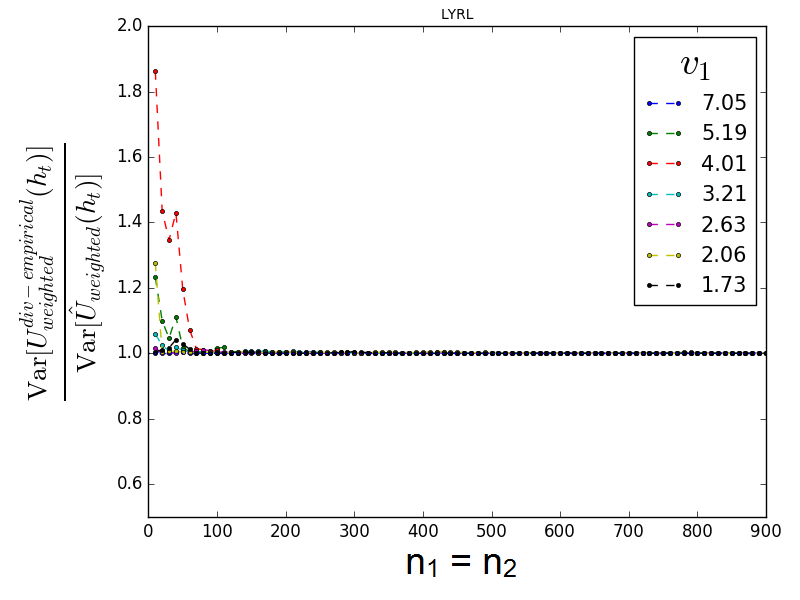}
    \caption{Variance with weights estimated from empirical divergences relative to optimal weights for the Weighted IPS Estimator. The estimation works very well when there is sufficient amount of log data. We chose $m_1 = m_2$, i.e. $r_1 = 1$ for convenience. Similar trends were observed for other values of $r_1$.}
    \label{fig:div}
\end{figure*}


\subsection{How does \texorpdfstring{$\Ubal$}{U-bal} compare with \texorpdfstring{$\Unaive$?}{U-naive}}
\label{sec:ubal}

We proved that the Balanced IPS Estimator has smaller (or equal) variance than the Naive IPS Estimator. The experiments reported in Figure~\ref{fig:r2} show the magnitude of variance reduction for $\Ubal$. In particular, Figure~\ref{fig:r2} reports the variance of the Balanced IPS Estimator relative to the variance of the Naive IPS Estimator for different logging policies $\hlog{1}$ and different data set imbalances. In all cases, $\Ubal$ performs at least as well as $\Unaive$ and the variance reduction increases when the two policies differ more (i.e. $v_1$ is large). The variance reduction due to $\Ubal$ decreases as the relative size of the log data from $\hlog{1}$ increases.   

\subsection{How does \texorpdfstring{$\Uwt$}{Uwt} compare with \texorpdfstring{$\Unaive$}{U-naive}?}

We know that the Weighted IPS Estimator always has lower variance (or equal) than the Naive IPS Estimator. The results in Figure~\ref{fig:optr3} show the magnitude of the relative variance improvement for the Weighted IPS Estimator. As in the case of the Balanced  IPS Estimator, ${ \Uwt }$ performs better than $\Unaive$ especially when the two logging policies differ substantially. This confirms the theoretical characterization of $\Uwt$ from Section~\ref{sec:varred}, where we computed the variance reduction given $r_1$ and $v_1$. The empirical findings are as expected by the theory and show a substantial improvement in this realistic setting. However, note that these experiments do not yet address the question of how to estimate the weights in practice, which we come back to in Section~\ref{sec:weightest}.  

\subsection{How does \texorpdfstring{$\Uwt$}{Uwt} compare with \texorpdfstring{$\Ubal$}{U-bal}?}

We did not find theoretical arguments whether $\Uwt$ is uniformly better than $\Ubal$ or vice versa. The empirical results in Figure~\ref{fig:optr3r2} confirm that either estimator can be preferable in some situations. Specifically, $\Uwt$ performs better when the difference between the two logging policies is large, whereas $\Ubal$ performs better when they are closer. This is an interesting phenomenon that merits future investigation. In particular, one might be able to combine the strengths of $\Uwt$ and $\Ubal$ to get a weighted form of the $\Ubal$ estimator. Since we know from the toy example that even $\Ubal$ can have lower variance with dropping data, it is plausible that it could improve if the samples were weighted non-uniformly.

\subsection{How can we estimate the weights for \texorpdfstring{$\Uwt$}{Uwt}?}
\label{sec:weightest}

We derived the optimal weights $\wtopt{i}$ in terms of $\diverg{\htar}{\hlog{i}}$. Computing the divergence exactly requires access to the utility function $\delta(x,y)$ on the entire domain $\mathcal{X} \times \mathcal{Y}$. However, $\delta(x,y)$ is known only at the samples collected as bandit feedback. We propose the following  strategy to estimate the weights in this situation. 

Each divergence can be estimated by using the empirical variance of the importance-weighted utility values available in the log data $\D{i}$. 
\begin{equation*}
    \diverghat{\htar}{\hlog{i}} = \hat{\mbox{Var}}_\D{i}\left[\frac{\del{i}{j} \cdot \htar(\y{i}{j}|\x{i}{j})}{\p{i}{j}}\right]
\end{equation*}
Under mild conditions, this provides a consistent estimate since $\x{i}{j} \sim \Pr(\Xset)$ and  $\y{i}{j} \sim \hlog{i}(\Yset|\x{i}{j})$. The weights $\wt{i}$ are then obtained using the estimated divergences.  

We tested this method by generating bandit data using the Supervised $\mapsto$ Bandit conversion method described in Section~\ref{sec:setup} for each logging policy, and then computing the weights as described above. Figure~\ref{fig:div} compares the variance of the weighted estimator with the estimated weights against the variance with the optimal weights. The x-axis varies the size of the log data for both logging policies $\hlog{1}$ and $\hlog{2}$ which are kept equal (i.e. $\size{1} = \size{2}$) for simplicity. As shown, the variance of the estimator with the estimated weights converges to that of the optimal weighted estimator within a few hundred samples for all choices of logging policies and across the three data-sets. Similar trends were observed for other values of relative log data size $r_1$ as well.  

Note that in this method we take the empirical variance of the importance-weighted utility values over each log $\D{i}$ individually to get reliable unbiased estimates of the true divergences. In contrast, the Naive IPS Estimator takes the empirical mean of the same values over the combined data $\Dall$. Therefore, the former estimation does not suffer from the suboptimality in variance that occurs due to naively combining data from different logging policies.  

Therefore, we conclude that the above method of estimating the weights performs quite well and seems well suited for practical applications.

\section{Conclusion}

We investigated the problem of estimating the performance of a new policy using data from multiple logging policies in a contextual bandit setting. This problem is highly relevant for practical applications since it reflects how logged contextual bandit feedback is available in online systems that are frequently updated (e.g. search engines, ad placement systems, product recommenders). We proposed two estimators for this problem which are provably unbiased and have lower variance than the Naive IPS Estimator. We empirically demonstrated that both can substantially reduce variance across a range of evaluation scenarios.

The findings raise interesting questions for future work. First, it is plausible that similar estimators and advantages also exist for other partial-information data settings \cite{Joachimsetal2017} beyond contextual bandit feedback. Second, while this paper only considered the problem of evaluating a fixed new policy $\htar$, it would be interesting to use the new estimators also for learning. In particular, they could be used to replace the Naive IPS Estimator when learning from bandit feedback via Counterfactual Risk Minimization \cite{swaminathan2015counterfactual}.

\begin{acks}
    
This work was supported in by under NSF awards IIS-1615706 and IIS-1513692, and through a gift from Bloomberg.
This material is based upon work supported by the National Science Foundation Graduate Research Fellowship Program under Grant No. DGE-1650441.
Any opinions, findings, and conclusions or recommendations expressed in this material are those of the author(s) and do not necessarily reflect the views of the National Science Foundation.

\end{acks}

\bibliographystyle{ACM-Reference-Format}
\bibliography{main} 

\end{document}